\newtheorem{theorem}{Theorem}
\newtheorem{lemma}[theorem]{Lemma}
\newenvironment{myproof}[1][\proofname]{\proof[#1]\mbox{}}{\endproof}
\title{\LARGE \bf
No-Regret Replanning under Uncertainty
}
\author{Wen Sun$^{1}$,  Niteesh Sood$^{2}$, Debadeepta Dey$^{3}$, Gireeja Ranade$^{3}$, \\ Siddharth Prakash$^{2}$, and Ashish Kapoor$^{3}$  
\thanks{$^{1}$The Robotics Institute, School of Computer Science, Carnegie Mellon University. The research work was done during an internship at Microsoft Research, Redmond.  {\tt\small wensun@cs.cmu.edu}. }%
\thanks{$^{2}$Microsoft Research, India}%
\thanks{$^{3}$Microsoft Research, Redmond}
}
\begin{document}

\maketitle
\thispagestyle{empty}
\pagestyle{empty}

\begin{abstract}
This paper explores the problem of path planning under uncertainty. Specifically, we consider online receding horizon based planners that need to operate in a latent environment where the latent information can be modeled via Gaussian Processes. Online path planning in latent environments is challenging since the robot needs to explore the environment to get a more accurate model of latent information for better planning later and also achieves the task as quick as possible. We propose UCB style algorithms that are popular in the bandit settings and show how those analyses can be adapted to the online robotic path planning problems. The proposed algorithm trades-off exploration and exploitation in near-optimal manner and has appealing no-regret properties. We demonstrate the efficacy of the framework on the application of aircraft flight path planning when the winds are partially observed.
\end{abstract}

\section{INTRODUCTION}
Finding an optimal path under unknown or partially observed environment is a challenging and an important task in robotics. In this paper, we consider an online replanning framework where in each round, the robot picks a direction to traverse and as it travels, it receives observations about unknown variables along the trajectory.
The robot then considers this newly acquired information to refine it knowledge about the environment, which in turn influences the action selection in the next round.  Finding an optimal strategy is challenging in such online replanning framework as the robot essentially faces a tradeoff between exploration and exploitation. In order to make inferences about the latent variables, the robot needs to pick actions that can drive itself around the space to gather information. Such exploration can be beneficial as more accurate knowledge of the environment promises more accurate estimation of the cost of trajectories. However, such actions come at a cost, especially if these information foraging actions make the robot deviate from its mission. Thus, it is important for the robot to make the right decisions about when/where to explore and when to exploit.

We specifically focus on receding-horizon replanning, where the robot is equipped with a pre-computed library of trajectories and planning entails picking a trajectory among the library at every round. Also, we consider the cases where the uncertainties in environment are unknown but can be approximately modeled by Gaussian Processes. There is a fairly large and important classes of natural phenomenon, including winds, oceanic currents and traffic volume that is spatially correlated, can be modeled with GPs. For example, consider an aerial robot that is attempting to minimize traversal time by exploiting the tail winds while avoiding the head winds. However, without complete information it is difficult for the robot to make the optimal decision. Consequently, we ask how should the aircraft traverse in the space in order to maximally utilize the winds while continuously sensing and updating its belief about the wind field.

This paper addresses such exploration-exploitation trade-off in the online replanning framework  by presenting a new and simple replanning strategy called \emph{Upper Confidence Bound} Replanning (UCB-Replanning). When trading between exploration and exploitation, UCB-Replanning uses the classic strategy of \emph{Optimism in the Face of Uncertainty}. Since Gaussian Processes provide a distribution over the latent variable of interest, UCB-Replanning can leverage the inferred uncertainty to build confidence intervals on the estimations of the cost of trajectories. During replanning, the robot then takes the estimation of the cost and the confidence interval of the estimation together into consideration to make a decision. We further analyze the performance of UCB-Replanning. Particularly, we show that UCB-Replanning is \emph{no-regret} in a sense that the UCB-Replanning in average is doing almost as well as picking the optimal trajectory assuming the full knowledge of unknown variables, along the states that the robot has visited. 

Finally we conduct a case study of aircraft navigating under wind uncertainty, where the wind speed is modeled by a Gaussian Process. We investigate the experimental performance of UCB-replanning under different types of wind map, including wind maps over the continental United States, constructed from real wind data provided by National Oceanic and Atmospheric Administration (NOAA). 

\section{RELATED WORK}
We discuss UCB-Replanning in relation to literature in three main areas: 1. Receding-horizon planning in robotics 2. Partially Observable Markov Decision Processes (POMDPs) and 3. Multi-Armed Bandit problems (MAB).

\noindent \textbf{Receding-horizon planning}: In receding-horizon control a library of pre-computed control command sequences are simulated forward from the current state of the robot using the dynamic motion model to come up with a set of dynamically feasible trajectories up to the planning horizon. This set of trajectories is then evaluated on the map of the world in the vicinity of the robot and amongst all the currently collision-free trajectories the one that makes most progress towards the goal is chosen for traversal \cite{green2007toward}. The selected trajectory is traversed for a portion of the time and the process of trajectory evaluation and selection is repeated again. Receding-horizon based planning has been widely used in aerial and ground robot navigation in cluttered environments \cite{urmson2008autonomous,dey2015vision,arora2015emergency} due to many attractive properties like finite runtime, adaptability to available computational budget and dynamic feasibility by construction. We use receding-horizon planning with pre-computed trajectory libraries as the  framework in this paper.

\noindent \textbf{Partially Observable Markov Decision Processes (POMDPs)}: POMDPs are used to model Markov Decision Processes (MDPs) where only part of the state of the world can be observed. Finding optimal policies of POMDP is NP-hard \cite{Papadimitriou:1987}. Approximate solutions like Point-Based Value Iteration \cite{pineau2003point,kurniawati2008sarsop}, Heuristic Search-Based Value Iteration \cite{smith2004heuristic} and Monte-Carlo planning \cite{silver2010monte} are popular goal-free reward oriented solvers. While goal-oriented methods like \cite{bonet2009sol} are more relevant to our problem scenario, they are hard to adapt to continuous observation spaces and computation and time budgets imposed by mobile robots. Belief Space Planning approaches (e.g.,\cite{prentice2009belief,platt2010belief,van2012motion,patil2015scaling,sun2015stochastic,sun2015high}) is also related to our work. But most of belief space planning approaches assume that the uncertainty is known, e.g., the form of the stochastic dynamics  are fully known. We do not even assume the form of the uncertainty is known here and we utilize Gaussian Process to keep tracking the uncertainty in a online manner while the robot is moving. 

Our work is closely related to that of Dey et al., \cite{dey2014gauss} who combined Canadian Traveler Problem (CTP) with GPs to formulate the problem of replanning as a Gaussian Traveler Problem (GTP). GTPs used determinization techniques like hindsight optimization \cite{olsen2011pon} to efficiently incorporate uncertainty over all edge costs of the graph in a GTP and show lower empirical cost of traversal to goal for an aircraft navigating partially known wind fields over continental US than merely replanning by the mean prediction over edge costs. GTPs have a number of limitations: 1. Discretization effects due to representing the problem on a graph. Making the graph dense has negative computational effects.  2. The edges of the graph may not be dynamically feasible for the aircraft to track. 3. The hindsight optimization determinization step requires sampling large number of possible future graph states which can be expensive. UCB-Replanning mitigates all of these issues.

\noindent \textbf{Multi-Armed Bandits}: \emph{Optimism in the face of uncertainty} is a classic strategy for trading off between exploration and exploitation in many Multi-Armed Bandit problems \cite{auer2002finite,auer2002nonstochastic,srinivas2010gaussian,bubeck2012regret} and Reinforcement Learning (RL) problems \cite{brafman2002r,li2012sample}. The classic Upper Confidence Bound (UCB) algorithm for MAB \cite{auer2002finite,bubeck2012regret} maintains a confidence interval of the true reward for each arm and pull the arm with the maximum upper bound of its confidence interval. UCB-Replanning leverages the classic analysis of MAB to analyze the performance of online receding horizon based planning.

\section{PRELIMINARIES}
Let us define $\mathbb{X}\subset \mathbb{R}^d$ as the state space for the robot. The state $\mathbf{x}\in \mathbb{X}$ includes the information of the robot such as positions and velocities. We model the uncertainty in the environment by a random variable $v\in\mathbb{R}$.\footnote{It is straightforward to extend to multi-variable case where variables can be modeled by multiple independent GPs} 
The realization of random variable $v$ depends on state of the robot and is modeled by an \emph{unknown} function subject to noise:
\begin{align}
v  = g(\mathbf{x}) + \epsilon,
\end{align} where we assume $\epsilon\sim\mathcal{N}(0, \sigma^2)$. These random variable could encode the variant types of uncertainties in the environment such as the speed of the wind at the current position of the robot, the estimated distance to a obstacle and so on. For notation simplicity, in the rest of the work, we define $v(\mathbf{x})$ as a (noisy) realization of the random variable $v$ at state $\mathbf{x}$.  Throughout this work, we assume that the unknown $g$ is sampled from a Gaussian process prior GP$(0, \kappa(\mathbf{x},\mathbf{x}'))$. 
Given a set of pairs $\{\mathbf{x}_i, v(\mathbf{x}_i)\}_{i=1}^N$, the posterior over $g$ is GP distribution with mean $\mu(\mathbf{x})$, covariance $cov(\mathbf{x},\mathbf{x}')$ and variance $\sigma^2(\mathbf{x})$ as:
\begin{align}
&cov(\mathbf{x},\mathbf{x}') = \kappa(\mathbf{x}, \mathbf{x}') - \mathbf{\kappa}_N(\mathbf{x})^T(\mathbf{K}_N+\sigma^2\mathbf{I})^{-1}\mathbf{\kappa}_N(\mathbf{x'}), \nonumber\\
&\mu(\mathbf{x}) = \mathbf{\kappa}_N(\mathbf{x})^T(\mathbf{K}_N + \sigma^2\mathbf{I})^{-1}\mathbf{y}_N, \;\; 
 \sigma^2(\mathbf{x}) = cov(\mathbf{x},\mathbf{x}),\nonumber
\end{align} where $\kappa_N(\mathbf{x}) = [\kappa(\mathbf{x}_1,\mathbf{x}), ..., \kappa(\mathbf{x}_N,\mathbf{x})]^T $ and $\mathbf{K}_N$ is the gram matrix with $\mathbf{K}_N [i,j] = \kappa(\mathbf{x}_i,\mathbf{x}_j)$. 

\begin{figure}[t!]
  \centering
      \includegraphics[width=0.3\textwidth]{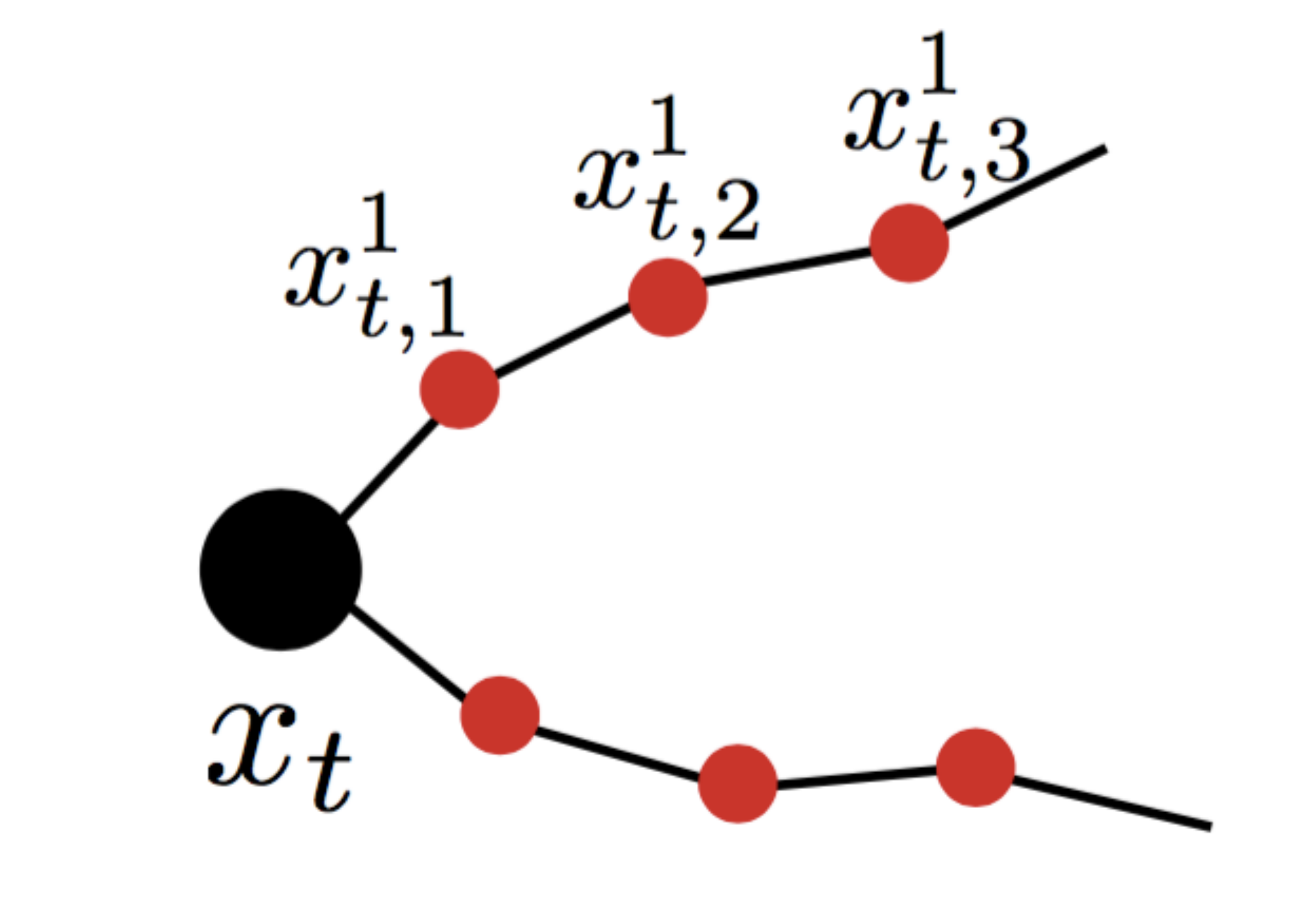}
  \caption{Notation of waypoints (red circles) of a set of trajectory library. At time step $t$ the robot is located at the root (black circle) of the trajectories and needs to make a decision about which trajectory to traverse next.}
  \label{fig:notations}
\end{figure}

We assume that the robot is equipped with a pre-computed library of trajectories $\{\tau_1,...,\tau_K\}$, where each trajectory $\tau_i$ consists of $L$ segments (Fig.~\ref{fig:notations}). Assume that at time step $t$, the robot's state is denoted as $\mathbf{x}_t$. Starting at $\mathbf{x}_t$, for each trajectory $\tau_i$, the $L+1$ waypoints are represented as $\{\mathbf{x}_{t,0}^i, \mathbf{x}_{t,1}^i, ..., \mathbf{x}_{t,L}^i\}$, where $\mathbf{x}_{t,0}^i = \mathbf{x}_t$, for all $i\in \{1,2,...,K\}$ (Fig.~\ref{fig:notations}).

At step $t$, located at state $\mathbf{x}_t$, the robot needs to pick a trajectory indexed at $I_t\in\{1,2.,...,K\}$  and execute $\tau_{I_t}$. Then the robot will traverse along $\tau_{I_t}$ and end at state $\mathbf{x}_{t,L}^{I_t}$. At the beginning of next around $t+1$, we set $\mathbf{x}_{t+1} = \mathbf{x}_{t,L}^{I_t}$, and repeat the above process. At every step $t$, each trajectory $\tau_i$ is equipped with a reward function $f_{t,i}$, which measures the reward of executing trajectory $\tau_i$ at step $t$. The reward function depends on the uncertain variables $v$ along the trajectory and we denote $f_{t,I_t}(\{v(\mathbf{x}_{t,j}^{I_t})\}_{j=0}^{L}): \mathbb{R}^{L+1}\rightarrow \mathbb{R}$.  Throughout this paper, we assume that $f_{t,i}$ is Lipschitz constant with respect to $\ell_1$ norm with Lipschitz constant $l$.

The ideal goal of the robot is to pick a sequence of trajectories $\{I^*_1,...,I^*_T\}$ from iteration $t=1$ to $T$, so that it can maximize the cumulative reward $\sum_{t=1}^T f_{t,I_t}(\{g(\mathbf{x}_{t,j}^{I_t})\}_{j=0}^{L})$ (Here we focus on maximizing with respect to $g(\mathbf{x})$, which is the expectation of $v(\mathbf{x})$).  Note that computing the optimal sequence of decisions $\{I_1^*, ..., I_T^*\}$ requires the full knowledge of the underlying map $g$, which is not available. 

It is not easy for the robot to pre-plan a sequence of the decisions since the robot does not have the exact information about $g$ except a prior, which could be non-informative. To refine its knowledge about $g(\mathbf{x})$, the robot needs to explore the area near $\mathbf{x}$ to collect observations of $v$ and update the GP.  Hence the robot needs to plan on the fly while collecting new information and refining its knowledge about $g$ for future planning. The robot essentially faces the tradeoff between exploration and exploitation: the robot needs to explore by choosing difference trajectories to get the information about the uncertain variables at different regions of its state space while it also needs to exploit by picking temporally high-reward trajectories to maximize its total reward.

\section{ALGORITHM}
We  leverage the strategy of optimism in the face of uncertainty to design our algorithm for robot to perform online replanning. Especially, we design an algorithm that is similar to UCB, where we maintain a confidence interval of the true reward for each trajectory. To design the confidence interval for each trajectory's reward at every step, we first extract a confidence interval of the uncertain variable $v$ from GP. We then use the Lipschitz continuity  of the reward function of each trajectory to transfer the confidence interval of the uncertain variable $v$ to the confidence interval of the reward of each trajectory. We finally choose the trajectory with the highest upper confidence bound of the reward estimation. The detailed algorithm is presented in Alg.~\ref{alg:ucb_replanning}.
\begin{algorithm}[tb]
\caption{UCB-Replanning}
 \label{alg:ucb_replanning}
\begin{algorithmic}[1]
  \STATE {\bfseries Input:} A library of $K$ trajectories $\{\tau_k\}_{k=1}^K$, sequence of parameters $\beta_t\in\mathbb{R}^+$. A GP $(\mu_0, \sigma_0)$ that models the variable $v$ over the state space $\mathbb{X}$.
 \FOR {t = 1 to T}
    \FOR {k = 1 to K}
        \STATE Compute the sequence of means of $g$ on the waypoints on $\tau_k$ as $\{\mu_{t-1}(\mathbf{x}_{t,j}^k)\}_{j=0}^{L}$\label{line:gp_mean}.
        \STATE Compute the sequence of standard deviations of $g$ on the waypoints as $\{\sigma_{t-1}(\mathbf{x}_{t,j}^k)\}_{j=0}^{L}$.
        \label{line:gp_sd}
        \STATE Compute the upper confidence bound of the reward function $f_{t,k}$ as: \label{line:ucb_equation}
            $b_k := f_{t,k}(\{\mu_{t-1}(\mathbf{x}_{t,j}^k)\}_{j=0}^L)+l\beta_t^{1/2}\sum_{j=0}^{L}\sigma_{t-1}(\mathbf{x}_{t,j}^k)$. 
    \ENDFOR
    \STATE Choose index $I_t = \arg\max_{k\in\{1,...,K\}} b_{k}$ and execute trajectory $\tau_{I_t}$.
    \STATE Observe samples of $g$ along the waypoints as $\{v(\mathbf{x}_{t,j}^{I_t})\}_{j=0}^{L-1}$ and use these $L$ samples to online update GP to obtain $\mu_t$ and $\sigma_t$. 
    \label{line:gp_update}
 \ENDFOR
 \end{algorithmic}
\end{algorithm}
In round $t$, Alg.~\ref{alg:ucb_replanning} first use the current GP model $(\mu_{t-1}, \sigma_{t-1})$ to compute the means and the standard deviations of $v$ along the waypoints of all $K$ trajectories (Line.~\ref{line:gp_mean} and \ref{line:gp_sd}). Then for each trajectory $\tau_k$, Alg.~\ref{alg:ucb_replanning} using the Lipschitiz constant $(l)$ and a scaling parameter $\beta_t$ (will be defined later in analysis) to compute the upper confidence bound of the reward function as shown in Line~\ref{line:ucb_equation}. It then picks the trajectory $\tau_{I_t}$ that has the highest upper confidence bound. During the execution of $\tau_{I_t}$, the robot receives observations of $v$ along the waypoints and online updates the GP model  (Line.~\ref{line:gp_update}).

\subsection{Analysis} 
We analyze the performance of Alg.~\ref{alg:ucb_replanning}. Particularly, we are interested in analyzing the \emph{regret}, which measures the difference between Alg.~\ref{alg:ucb_replanning}'s cumulative reward and the cumulative reward if one always picks the best trajectories along the states that the robot traversed when executing Alg.~\ref{alg:ucb_replanning}. More formally, let us assume that the the sequence of states that the robot visited at all $T$ rounds as: $\{\mathbf{x}_{1}, \mathbf{x}_{2}, ..., \mathbf{x}_{T}\}$ and the indexes of the trajectories that the robot picked at all $T$ rounds as $\{I_1,...,I_T\}$. We define the regret as:
\begin{align}
\label{eq:regret_definition}
&\mathbf{R}_T = \frac{1}{T}\Big[\sum_{t=1}^T\max_{I_t^*\in[K]}f_{t,I_t^*}\big(\{g(\mathbf{x}_{t,j}^{I_t^*}) \}_{j=0}^{L}\big) \nonumber \\
&\;\;\;\; - \sum_{t=1}^T f_{t,I_t}\big(\{g(\mathbf{x}_{t,j}^{I_t})\}_{j=0}^{L}\big) \Big]
\end{align} Namely, at each round $t$, we measure how much more reward the robot could gain if it could pick $I_t^*$ instead of $I_t$ at $\mathbf{x}_t$. The goal is to make regret converges to zero so that in average the robot has little regret in terms of choosing $I_t$. 

We remark that our regret definition measures the difference between the rewards of an optimal decision maker with full access to latent information and the rewards of the learning algorithm \emph{along the states taken by the learning algorithm}. 
Ideally one would be interested in the regret of the learning algorithm in respect to the rewards of the optimal decision maker \emph{along the states generated from the optimal decision maker itself}. It turns out that the latter definition of regret is impossible to achieve without any assumptions about the reachability of the systems and the ability to reset \cite{li2012sample}.  Consider the MDP shown in Fig.~\ref{fig:mdp_example} with 3 states and 2 actions. Once the agent makes the mistake of taking action $a_2$ at $x_0$, possibly due to the lack of the full knowledge of the model,
it will be stuck in $x_2$ forever and the regret with
respect to the optimal decision maker on the optimal sequence $\{x_0, x_1, x_1,...\}$ will grow linearly.
It is also worth mentioning that our definition of regret in
Eqn.~\ref{eq:regret_definition} is similar to the classic sample complexity definition
of exploration in reinforcement learning \cite{li2012sample}, in the sense that
the sample complexity of exploration measures the number
of mistakes the learning algorithm makes on the sequence of
states generated from the algorithm, instead of the sequence of the states resulting from the optimal policy. 

\begin{figure}[t]
  \centering
      \includegraphics[width=0.3\textwidth]{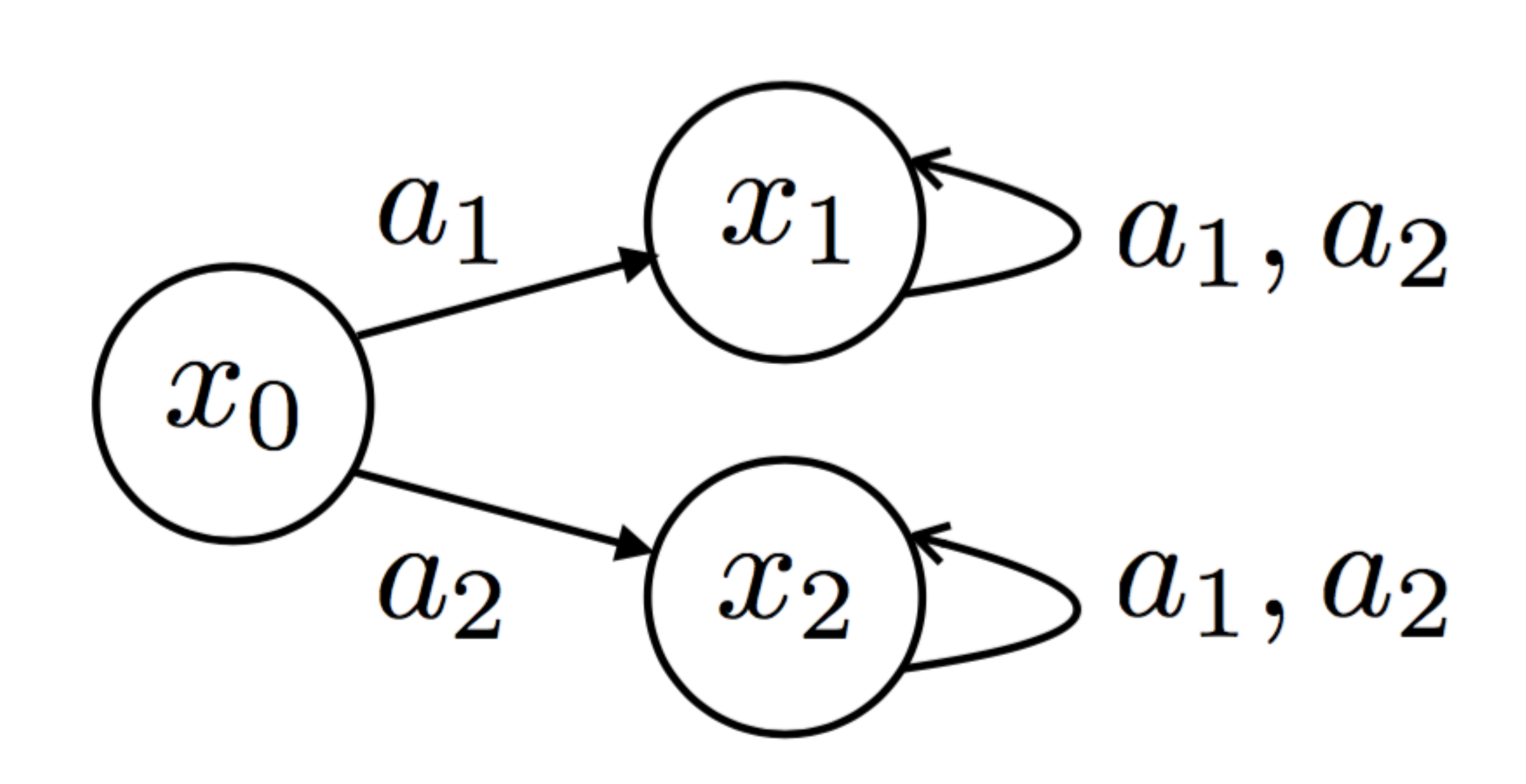}
  \caption{A decision making problem with three states and two possible actions. The agent starts from x0 and upon taking action $a_1$ ($a_2$) reaches $x_1$ ($x_2$). The reward structure is such that the agent receives a reward $1$ upon landing at state $x_1$ and no reward elsewhere. Once the agent lands in either $x_1$ or $x_2$, there is no way for it to move to any other state. Hence the optimal sequence of state for this problem is  $\{x_0, x_1, x_1,...\}$.
  }
  \label{fig:mdp_example}
  \vspace{-10pt}
\end{figure}

Now we are ready to show that Alg.~\ref{alg:ucb_replanning} is no-regret: $\mathbf{R}_t\to 0$, as $T\to\infty$. Let us define $D_t$ as the states of all waypoints on all $K$ trajectories: $D_t = \{\{\mathbf{x}_{t,j}^{k}\}_{j=0}^{L}\}_{k=1}^{K}$, where $|D_t| = (L+1)K$. The following lemma builds the confidence interval over $g$ on all waypoints in $D_t$ at all rounds $t$:
\begin{lemma}
\label{lemma:high_prob_on_variable}
With $\beta_t = 2\log(\frac{|D_t|\pi_t}{\delta})$ and any $\pi_t$ that satisfies $\sum_t 1/\pi_t = 1, \pi_t >0$, with probability at least $1-\delta$ we have:
\begin{align}
\forall t, \forall x\in D_t,  |g(\mathbf{x}) - \mu_{t-1}(\mathbf{x})|\leq \beta_t^{1/2}\sigma_{t-1}(x).
\end{align}
\end{lemma}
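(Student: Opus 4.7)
The plan is to follow the standard Gaussian confidence-interval argument used for GP-UCB (cf.\ Srinivas et al.\ \cite{srinivas2010gaussian}), adapted to the fact that the candidate set $D_t$ is not fixed a priori but is determined by the history of actions and observations up to round $t-1$. The key observation is that conditioned on the filtration $\mathcal{F}_{t-1}$ generated by the past states, actions and noisy observations of $v$, (i) the set $D_t$ is deterministic (the waypoint locations are computed from $\mathbf{x}_t$, which is $\mathcal{F}_{t-1}$-measurable), and (ii) the posterior over $g$ is a Gaussian Process with mean $\mu_{t-1}$ and variance $\sigma_{t-1}^2$.

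First I would fix $t$ and $\mathbf{x}\in D_t$. Conditioned on $\mathcal{F}_{t-1}$, we have $g(\mathbf{x})\sim\mathcal{N}(\mu_{t-1}(\mathbf{x}),\sigma_{t-1}^2(\mathbf{x}))$, so $Z:=(g(\mathbf{x})-\mu_{t-1}(\mathbf{x}))/\sigma_{t-1}(\mathbf{x})$ is standard normal. The standard one-sided Gaussian tail bound $\Pr(|Z|>c)\leq e^{-c^2/2}$ then gives
\begin{align}
\Pr\bigl(|g(\mathbf{x})-\mu_{t-1}(\mathbf{x})|>\beta_t^{1/2}\sigma_{t-1}(\mathbf{x})\,\big|\,\mathcal{F}_{t-1}\bigr)\leq e^{-\beta_t/2}.
\end{align}

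Next, since $D_t$ is $\mathcal{F}_{t-1}$-measurable with $|D_t|=(L+1)K$, a union bound over $\mathbf{x}\in D_t$ (still conditional on $\mathcal{F}_{t-1}$) yields a conditional failure probability at most $|D_t|\,e^{-\beta_t/2}$. Plugging in $\beta_t=2\log(|D_t|\pi_t/\delta)$ gives $|D_t|\,e^{-\beta_t/2}=\delta/\pi_t$, and taking expectation over $\mathcal{F}_{t-1}$ preserves this marginal bound. Finally, a second union bound over all rounds $t\geq 1$ gives total failure probability at most $\sum_t \delta/\pi_t=\delta$, since $\sum_t 1/\pi_t=1$. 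Complementing yields the claimed uniform bound with probability at least $1-\delta$.

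The only subtle step is the first one, which is why I would spell out the conditioning carefully: one might worry that because $D_t$ is chosen adaptively based on past samples of $v$ (which are correlated with $g$ under the GP prior), the Gaussian posterior at $\mathbf{x}\in D_t$ could be biased. The resolution is exactly the Markov property used above: conditioning on the entire history renders the posterior of $g$ a proper GP with the computed mean and variance, and any function of the history (including the set $D_t$) can then be treated as fixed when applying the per-point Gaussian tail bound. Once this is granted, the remainder is a routine two-level union bound.
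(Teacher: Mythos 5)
Your proof is correct and follows essentially the same route as the paper's: a per-point Gaussian tail bound $\Pr(|Z|>c)\leq e^{-c^2/2}$ applied to the posterior at each waypoint, with $\beta_t$ chosen so the per-point failure probability is $\delta/(\pi_t|D_t|)$, followed by union bounds over $D_t$ and over rounds using $\sum_t 1/\pi_t=1$. Your explicit conditioning on the filtration $\mathcal{F}_{t-1}$ to justify treating the adaptively chosen $D_t$ as fixed is a welcome clarification of a step the paper leaves implicit, but it is not a different argument.
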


The above lemma is essentially the same as Lemma 5.1 in \cite{srinivas2010gaussian}. For completeness we include the proof in the appendix. 


The next lemma builds a confidence interval over the rewards of all $K$ trajectories, at all rounds. 
\begin{lemma}
Set $\beta_t = 2\log(\frac{|D_t|\pi_t}{\delta})$ and $\sum_t 1/\pi_t = 1, \pi_t >0$, we have with probability at least $1-\delta$:
\begin{align}
\label{eq:lipschitz_f}
&\forall k\in [K],\forall t, |f_{t,k}(\{g(\mathbf{x}_{t,j}^k)\}_{j=0}^{L}) - f_{t,k}(\{\mu_{t-1}(\mathbf{x}_{t,j}^k)\}_{j=0}^{L})| \nonumber\\
&\;\;\;\;\; \leq l\beta_t^{1/2} \sum_{j=0}^{L}\sigma_{t-1}(\mathbf{x}_{t,j}^{k}). 
\end{align}
\label{lemma:ci_traj}
\end{lemma}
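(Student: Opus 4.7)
The plan is to chain two ingredients: the pointwise confidence bound from Lemma~\ref{lemma:high_prob_on_variable} and the $\ell_1$-Lipschitz assumption on the reward functions $f_{t,k}$. The key observation is that the set $D_t$ by construction contains every waypoint of every trajectory in the library at round $t$, so Lemma~\ref{lemma:high_prob_on_variable} already supplies, on a single high-probability event of measure at least $1-\delta$, a simultaneous bound on $|g(\mathbf{x}_{t,j}^k)-\mu_{t-1}(\mathbf{x}_{t,j}^k)|$ for all $k\in[K]$, all $j\in\{0,\dots,L\}$, and all $t$. No additional union bound or concentration argument is needed at this step; I just reuse the event from Lemma~\ref{lemma:high_prob_on_variable} verbatim.

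First, I would fix a round $t$ and a trajectory index $k$, and write $\mathbf{a} = (g(\mathbf{x}_{t,0}^k),\dots,g(\mathbf{x}_{t,L}^k))$ and $\mathbf{b} = (\mu_{t-1}(\mathbf{x}_{t,0}^k),\dots,\mu_{t-1}(\mathbf{x}_{t,L}^k))$ as two vectors in $\mathbb{R}^{L+1}$. By the $\ell_1$-Lipschitz hypothesis on $f_{t,k}$ with constant $l$, we have $|f_{t,k}(\mathbf{a}) - f_{t,k}(\mathbf{b})| \leq l\,\|\mathbf{a}-\mathbf{b}\|_1 = l\sum_{j=0}^{L} |g(\mathbf{x}_{t,j}^k) - \mu_{t-1}(\mathbf{x}_{t,j}^k)|$. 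Then, on the high-probability event of Lemma~\ref{lemma:high_prob_on_variable}, each summand is at most $\beta_t^{1/2}\sigma_{t-1}(\mathbf{x}_{t,j}^k)$, so pulling the common factor out yields exactly the claimed bound \eqref{eq:lipschitz_f}. Since the event from Lemma~\ref{lemma:high_prob_on_variable} is simultaneous over $k$ and $t$, the resulting bound on $|f_{t,k}(\mathbf{a})-f_{t,k}(\mathbf{b})|$ also holds simultaneously over $k\in[K]$ and $t\geq 1$ on the same event.

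There is no genuine obstacle here; the statement is essentially a deterministic consequence of the Lipschitz property applied on top of the probabilistic estimate from Lemma~\ref{lemma:high_prob_on_variable}. The only subtlety worth flagging in the write-up is to be explicit that the same failure event $\delta$ is inherited from Lemma~\ref{lemma:high_prob_on_variable}, rather than paying a second $\delta$ for the reward bound, and that the $\ell_1$ Lipschitz norm is exactly what converts the coordinatewise bounds into the single sum $\sum_{j=0}^{L}\sigma_{t-1}(\mathbf{x}_{t,j}^k)$ that appears in \eqref{eq:lipschitz_f}. Had the Lipschitz assumption been stated in, say, $\ell_2$, one would instead obtain $l\beta_t^{1/2}(\sum_j \sigma_{t-1}^2)^{1/2}$, which would change the form of the UCB bonus in Line~\ref{line:ucb_equation}; the use of $\ell_1$ is precisely what makes the algorithm's bonus term match the confidence interval.
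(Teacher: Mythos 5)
Your proposal is correct and follows essentially the same route as the paper's own proof: condition on the single $1-\delta$ event from Lemma~\ref{lemma:high_prob_on_variable} (which covers all waypoints in $D_t$ for all $t$), apply the $\ell_1$-Lipschitz property of $f_{t,k}$ coordinatewise, and bound each term $|g(\mathbf{x}_{t,j}^k)-\mu_{t-1}(\mathbf{x}_{t,j}^k)|$ by $\beta_t^{1/2}\sigma_{t-1}(\mathbf{x}_{t,j}^k)$, with no extra union bound or second failure probability. Your closing remark about how an $\ell_2$-Lipschitz assumption would alter the bonus term is a nice observation but not needed for the lemma itself.
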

\begin{proof}
Let us define event $A$ as $\forall t, \forall x\in D_t,  |g(\mathbf{x}) - \mu_{t-1}(\mathbf{x})|\leq \beta_t^{1/2}\sigma_{t-1}(\mathbf{x})$ and from the previous lemma, we know that event $A$ happens with probability at least $1-\delta$. We now condition on event $A$.  Below we show that event $A$ will imply the inequality in the above lemma. 

Since we assume that the reward functions $f_{t,k}$ is Lipschitz continuous, we must have for any $t$ and $k$:
\begin{align}
&|f_{t,k}(\{g(\mathbf{x}_{t,j}^k)\}_{j=0}^{L}) - f_{t,k}(\{\mu_{t-1}(\mathbf{x}_{t,j}^k)\}_{j=0}^{L})| \nonumber\\
&\;\;\;\; \leq l\sum_{j=0}^{L}|g(\mathbf{x}_{t,j}^k) - \mu_{t-1}(\mathbf{x}_{t,j}^k)|, \nonumber
\end{align} where $l$ is the Lipschitz constant. Conditioned on the fact that event $A$ happens, we have that at any round $t$ and for any trajectory $\tau_k$, we have that:
\begin{align}
&|f_{t,k}(\{g(\mathbf{x}_{t,j}^k)\}_{j=0}^{L}) - f_{t,k}(\{\mu_{t-1}(\mathbf{x}_{t,j}^k)\}_{j=0}^{L})| \nonumber\\
& \leq l\sum_{j=0}^{L}|g(\mathbf{x}_{t,j}^k) - \mu_{t-1}(\mathbf{x}_{t,j}^k)| \leq l\sum_{j=0}^{L}\beta_t^{1/2}\sigma_{t-1}(\mathbf{x}_{t,j}^k), \nonumber
\end{align}where we use the assumption that $f$ is $l$-Lipschitz continuous in $\ell_1$ norm. To this end, we have already shown that event $A$ implies the inequality in the above lemma. 
Since the probability that the event $A$ happens is at least $1-\delta$, we must have with probability at least $1-\delta$, $\forall t, k$ 
\begin{align}
&|f_{t,k}(\{g(\mathbf{x}_{t,j}^k)\}_{j=0}^{L}) - f_{t,k}(\{\mu_{t-1}(\mathbf{x}_{t,j}^k)\}_{j=0}^{L})| \nonumber\\
&\leq l\sum_{j=0}^{L}\beta_t^{1/2}\sigma_{t-1}(\mathbf{x}_{t,j}^k).
\end{align}Hence we prove the lemma. 
\end{proof}

Now we present the main theorem. We consider two types of Kernels: (1) linear kernel as $\kappa(\mathbf{x},\mathbf{x}') = \mathbf{x}^T\mathbf{x}'$, and (2) square exponential kernel as $\kappa(\mathbf{x},\mathbf{x}') = \exp(-c\|\mathbf{x}-\mathbf{x}'\|^2)$. 
\begin{theorem}
\label{them:main_them}
With $\beta_t = 2\log(\frac{|D_t|\pi_t}{\delta})$ and $\sum_t 1/\pi_t = 1, \pi_t >0$, we will have with probability at least $1-\delta$:
\begin{align}
\mathbf{R}_T/T \leq O(d\log(LT)/T) \rightarrow 0,\;\;\; T\rightarrow\infty,
\end{align} for linear kernel as $\kappa(\mathbf{x},\mathbf{x}') = \mathbf{x}^T\mathbf{x}'$, and
\begin{align}
\mathbf{R}_T/T\leq O\big((\log (LT) )^{d+1} / T\big)\rightarrow 0,\;\;\; T\rightarrow\infty,
\end{align} for squared exponential kernel $\kappa(\mathbf{x},\mathbf{x}') = \exp(-c\|\mathbf{x}-\mathbf{x}'\|^2)$.
\end{theorem}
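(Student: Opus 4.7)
The plan is to run the standard UCB / GP-UCB style regret decomposition, ending with a bound that plugs in the kernel-specific maximum information gain.

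First I would derive an instantaneous regret bound. Fix a round $t$ and condition on the high-probability event of Lemma~\ref{lemma:ci_traj}. By the UCB selection rule, $b_{I_t} \ge b_{I_t^*}$, and by Lemma~\ref{lemma:ci_traj} applied to the optimal arm,
\begin{align*}
f_{t,I_t^*}\!\big(\{g(\mathbf{x}_{t,j}^{I_t^*})\}_{j=0}^{L}\big)
&\le f_{t,I_t^*}\!\big(\{\mu_{t-1}(\mathbf{x}_{t,j}^{I_t^*})\}_{j=0}^{L}\big) + l\beta_t^{1/2}\sum_{j=0}^{L}\sigma_{t-1}(\mathbf{x}_{t,j}^{I_t^*}) \\
&= b_{I_t^*} \le b_{I_t}.
\end{align*}
Expanding $b_{I_t}$ and using Lemma~\ref{lemma:ci_traj} once more on the chosen arm gives the instantaneous regret bound
\begin{equation*}
f_{t,I_t^*}\!\big(\{g(\mathbf{x}_{t,j}^{I_t^*})\}\big) - f_{t,I_t}\!\big(\{g(\mathbf{x}_{t,j}^{I_t})\}\big)
\;\le\; 2\, l\,\beta_t^{1/2}\sum_{j=0}^{L}\sigma_{t-1}(\mathbf{x}_{t,j}^{I_t}).
\end{equation*}

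Second, I would sum over $t$ and apply Cauchy--Schwarz to pass from $\sigma_{t-1}$ to $\sigma_{t-1}^2$:
\begin{equation*}
\sum_{t=1}^{T}\sum_{j=0}^{L}\sigma_{t-1}(\mathbf{x}_{t,j}^{I_t})
\;\le\; \sqrt{(L+1)T}\,\sqrt{\sum_{t=1}^{T}\sum_{j=0}^{L}\sigma_{t-1}^{2}(\mathbf{x}_{t,j}^{I_t})}.
\end{equation*}
Now I would invoke the key quantity from Srinivas et al., the maximum information gain $\gamma_N$ over $N$ samples. Because the algorithm collects $L$ new observations each round, over $T$ rounds the total number of observations is $O(LT)$, and the standard potential-function argument (using the identity that $\log(1+\sigma^{-2}\sigma_{t-1}^2(\mathbf{x}))$ coincides with the conditional mutual information contribution of observing $v(\mathbf{x})$) yields
\begin{equation*}
\sum_{t=1}^{T}\sum_{j=0}^{L}\sigma_{t-1}^{2}(\mathbf{x}_{t,j}^{I_t}) \;\le\; C\,\gamma_{LT},
\end{equation*}
where $C$ depends only on $\sigma^2$. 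A minor subtlety I will have to address here is that $\sigma_{t-1}$ on the waypoints $\mathbf{x}_{t,1}^{I_t},\dots,\mathbf{x}_{t,L}^{I_t}$ is evaluated \emph{before} the $L$ within-round observations are folded in, whereas the information-gain telescoping requires conditioning on all previously collected samples; this is handled by the usual monotonicity $\sigma_{t-1}(\mathbf{x}) \ge \sigma_{t-1,j}(\mathbf{x})$ of posterior variance under additional conditioning, which only makes the left-hand side larger and thus preserves the inequality after a constant-factor adjustment.

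Finally I would substitute the kernel-specific information-gain bounds from Srinivas et al., namely $\gamma_N = O(d\log N)$ for the linear kernel on $\mathbb{R}^d$ and $\gamma_N = O((\log N)^{d+1})$ for the squared exponential kernel, with $N=LT$. Combining everything,
\begin{equation*}
\mathbf{R}_T \;\le\; \frac{2l\beta_T^{1/2}}{T}\sqrt{C(L+1)T\,\gamma_{LT}},
\end{equation*}
and dividing once more by $T$ as in the theorem statement and absorbing the $\beta_T^{1/2} = O(\sqrt{\log(|D_T|\pi_T/\delta)})$ factor into the stated orders gives the two claimed rates $O(d\log(LT)/T)$ and $O((\log(LT))^{d+1}/T)$, both vanishing as $T\to\infty$. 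The main obstacle I anticipate is the information-gain step: correctly lining up the per-round batch of $L$ observations with the single-observation telescoping in the original GP-UCB proof, and verifying that the resulting constants still give the advertised dependence on $L$ and $T$ rather than something worse.
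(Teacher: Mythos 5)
Your overall route is the same as the paper's: condition on the event of Lemma~\ref{lemma:ci_traj}, use the UCB selection rule to get the instantaneous bound $r_t \le 2l\beta_t^{1/2}\sum_{j}\sigma_{t-1}(\mathbf{x}_{t,j}^{I_t})$, pass to squared posterior variances via Cauchy--Schwarz (you apply it once to the double sum, the paper squares each $r_t$ and applies it over rounds --- immaterial), convert $\sigma_{t-1}^2$ into $\log(1+\sigma^{-2}\sigma_{t-1}^2)$ terms, bound by the maximum information gain over $LT$ points, and plug in the kernel-specific bounds $\gamma_N = O(d\log N)$ and $O((\log N)^{d+1})$ from Srinivas et al. So in structure and in the final bound you match the paper.

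The one step that fails as written is your resolution of the batch-conditioning subtlety. The telescoping identity equates the information gain with $\tfrac12\sum\log(1+\sigma^{-2}\tilde\sigma^2)$ where $\tilde\sigma$ is the posterior standard deviation conditioned on \emph{all} previously revealed observations, including the earlier waypoints of the same round; the algorithm's $\sigma_{t-1}(\mathbf{x}_{t,j}^{I_t})$ conditions only on rounds $1,\dots,t-1$. Monotonicity of posterior variance under extra conditioning therefore gives $\sigma_{t-1}(\mathbf{x}_{t,j}^{I_t}) \ge \tilde\sigma(\mathbf{x}_{t,j}^{I_t})$, i.e.\ your left-hand side $\sum_{t,j}\sigma_{t-1}^2$ \emph{over}-estimates the telescoped sum, which threatens the inequality $\sum_{t,j}\sigma_{t-1}^2 \le C\gamma_{LT}$ rather than preserving it --- the direction of your argument is backwards. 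The honest fix is a variance-ratio bound in the style of GP-BUCB (Desautels et al.): $\sigma_{t-1}^2(\mathbf{x})/\tilde\sigma^2(\mathbf{x}) \le \exp\bigl(2I\bigr)$ with $I$ the within-round conditional information gain, bounded by $\gamma_L$, which is independent of $T$; this multiplies the bound by a $T$-independent constant and leaves the stated rates intact, but it is a real argument, not plain monotonicity. To be fair, the paper's own proof silently invokes Lemmas 5.3--5.4 of Srinivas et al.\ (which are stated for one observation per round) without addressing this point at all, so you are ahead of the paper in noticing the issue --- you just need to close it with the ratio argument rather than the wrong-way monotonicity claim.
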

\begin{myproof}[Proof Sketch] For the sake of brevity, we provide the complete proof in the appendix and an
abbreviated sketch below. Let us define event $B$ as the inequality~\ref{eq:lipschitz_f} shown in Lemma~\ref{lemma:ci_traj}. 
From Lemma.~\ref{lemma:ci_traj} we know that the probability of event $B$ happens is at least $1-\delta$. Below we show that event $B$ implies the two equalities in the above theorem. For the rest of the proof, we assume we condition on that event $B$ happens. Consider round $t$. Note that $I_t$ is defined as:
\begin{align}
\label{eq:m_1_1}
& I_t = \arg\max_{k\in[K]}f_{t,k}(\{\mu_{t-1}(\mathbf{x}_{t,j}^k)\}_{j=0}^{L}) \nonumber\\
&\;\;\;\; + l\beta^{1/2}\sum_{j=0}^{L}\sigma_{t-1}(\mathbf{x}_{t,j}^{k}),  
\end{align}and $I_t^*$ is defined as:
\begin{align}
\label{eq:m_2_2}
I_t^* = \arg\max_{k\in[K]}f_{t,k}(\{g(\mathbf{x}_{t,j}^k)\}_{j=0}^{L}),
\end{align} namely the best trajectory one would pick at this round $t$ if $g$ is known. Now let us define the single step regret $r_t$ as:
\begin{align}
r_t = f_{t,I_t^*}\big(\{g(\mathbf{x}_{t,j}^{I_t^*})\}_{j=0}^{L}\big) - f_{t,I_t}\big(\{g(\mathbf{x}_{t,j}^{I_t})\}_{j=0}^{L}\big),
\end{align}namely the regret  one has by choosing $I_t$ instead of $I_t^*$ at round $t$. Similar to classic analysis of UCB based algorithms, we can upper bound $r_t$ using Eqn.~\ref{eq:m_1_1} and \ref{eq:m_2_2}:
\begin{align}
& r_t \leq 2l\beta_t^{1/2}\sum_{j=0}^{L}\sigma_{t-1}(\mathbf{x}_{t,j}^{I_t}).
\end{align}
Square both sides of the above inequality and use similar techniques from \cite{srinivas2010gaussian}, we have for $r_t^2$:
\begin{align}
&r_t^2 = 4l^2\beta_t(\sum_{j=0}^{L}\sigma_{t-1}(\mathbf{x}_{t,j}^{I_t}))^2\leq 4l^2\beta_tL\sum_{j=0}^{L}\sigma_{t-1}(\mathbf{x}_{t,j}^{I_t})^2 \nonumber \\
&\leq 4l^2\beta_TL\sigma^2 C_1\sum_{j=0}^{L}\log\big(1+\sigma^{-2}\sigma_{t-1}(\mathbf{x}_{t,j}^{I_t})^2\big)
\end{align} where $C_1 = \sigma^{-2}/\log(1+\sigma^{-2})\geq 1$.

Since the regret $\mathbf{R}_T = \sum_{t=1}^T r_t$, we must have $\mathbf{R}_T^2\leq T\sum_t r_t^2$. Using Lemma 5.3 and Lemma 5.4 from \cite{srinivas2010gaussian}, we can link $\mathbf{R}_T$ to the maximum information gain as follows:
\begin{align}
&\mathbf{R}_T^2 \leq 4l^2\beta_TL^2\sigma^2C_1 T\sum_{t=1}^T\sum_{j=0}^{L-1}\log(1+\sigma^{-2}\sigma_{t-1}(\mathbf{x}_{t,j}^{I_t})^2) \nonumber\\
&\;\;\;\;\;\;  \leq 4l^2\beta_TL^2\sigma^2C_1 T \gamma_T,   \nonumber
\end{align} where $\gamma_T$ is the maximum information gain defined as 
\begin{align}
&\gamma_T = \max_{A\subseteq \mathbb{X}, |A| = LT}I(v_A; g) \nonumber\\
&= \max_{A\subseteq \mathbb{X}, |A| = LT}H(v_A) - H(v_A | g),\nonumber
\end{align} where $H(x)$ is the entropy of the random variable $x$, $H(x|y)$ is the conditional entropy, $v_A = \{g(\mathbf{x}) + \epsilon\}_{x\in A}$ is the set of observations of $g(\mathbf{x})$ for all states $\mathbf{x}$ in set $A$. Namely $\gamma_T$ quantifies the maximum reduction in uncertainty about $g$
from revealing the observations of $g$ on $LT$ states.

Theorem 5 from \cite{srinivas2010gaussian} shows $\gamma_T\leq O(d \log LT)$ if $\kappa(\mathbf{x},\mathbf{x}') = \mathbf{x}^T\mathbf{x}'$ and $\gamma_T\leq O((\log LT)^{d+1})$ if $\kappa(\mathbf{x},\mathbf{x}') = \exp(-c\|\mathbf{x}-\mathbf{x}'\|^2)$. Substitute these results to the above inequality, we prove the theorem. 
\end{myproof}

The above theorem shows that as the number of rounds approaches infinity, in average, the policy presented at Alg.~\ref{alg:ucb_replanning} performs almost as well as the \emph{best} policy which can always choose the best trajectory at every round. Note that the average regret of using squared exponential kernel (e.g., RBF kernel) shrinks more slowly than the average regret of linear kernel. This indicates that in general if the wind map $g$ is complicated (i.e., requiring RBF kernel to model it), Alg.~\ref{alg:ucb_replanning} requires more rounds to achieve good performance.

\section{CASE STUDY: AIRCRAFT NAVIGATION UNDER WIND UNCERTAINTY}
\label{sec:case_study}
We conduct a case study of aircraft nagiviation under wind uncertainty and show how UCB-Replanning can be applied. Let us define the state space of the aircraft $\mathbb{X}\in\mathbb{R}^2$ (2D position) where we assume that $\mathbb{X}$ is compact. We fix the norm of airplane's speed to $v_0\in\mathbb{R}^2$. At a particular position $\mathbf{x}$, the speed of the wind $v\in\mathbb{R}^2$ is computed from an unknown mapping $g:\mathbb{X}\to \mathbb{R}^2$, subject to Gaussian noise as $v(\mathbf{x}) = g(\mathbf{x}) + \epsilon$, $\epsilon \sim \mathcal{N}(0,\sigma^2 I)$. We assume that range of the speed of wind is bound as $\|v\|_2 \in [v_{\min}, v_{\max}]$, where $v_{\min},v_{\max}\in\mathbb{R}^+$, and we further assume that $v_{\max} = \|v_0\|_2 / 2$. Give the position $\mathbf{x}$ of the aircraft, the real speed of the aircraft can be computed as follows:
\begin{align}
\tilde{v}(\mathbf{x}) = v_0 + \frac{\langle v_0, v(\mathbf{x})  \rangle}{\|v_0\|_2^2}v_0,
\end{align} where the second part of the RHS of the above equation is the projection of the wind speed $v(\mathbf{x})$ at location $\mathbf{x}$ onto the airplane's speed. Overall the goal of the aircraft is to leverage the speed of the wind to decrease its traveling time. 

We use Gaussian Process with squared exponential kernel $\kappa(\mathbf{x},\mathbf{x}') = \alpha \exp(-c\|\mathbf{x}-\mathbf{x}'\|^2)$ ($c,\alpha\in \mathbb{R}^+$) to model the wind speed.  At every round, the aircraft chooses a trajectory from a pre-computed library of $K = 48$ trajectories \cite{green2007toward}. 
Each trajectory is a spline with $L$ segments, each segment having a length $d$.

Given the start position and the final position, the goal of the aircraft is to minimize the total traveling time. Hence our reward function is related to the traveling time. Let us assume that at round $t$, the robot is at state $\mathbf{x}_t$.  For each trajectory $\tau_k, k\in [K]$, we design the reward function with respect to the speed of the wind as:
\begin{align}
\label{eq:reward}
f_{t,k}(\{v(\mathbf{x}_{t,j}^{k})\}_{j=0}^{L}) = -\Big(\sum_{i=0}^{L-1}\frac{d}{\|\tilde{v}(\mathbf{x}_{t,i}^{k})\|_2} + \lambda_t t_{g}\Big),
\end{align} where $\lambda_t\in (0,1]$.  The first part of the RHS of the above equation is the total time for traversing the trajectory $\tau_k$ while the second part $t_g$ serves as an estimation of the left time for traveling from the the end of the trajectory $\tau_k$ and the final position. In this work, we use the total time of traveling along the Great Circle Route for $t_g$ (Note that $t_g$ could also be regarded as a scaled shortest distance to goal).  \footnote{We experimentally verified that incorporating wind estimation into the computation of $t_g$ significantly worsen the performance. This is because the wind estimation around the area that is far away from the airplane's current position is usually low-quality.}

To apply UCB-Replanning, we first set $\pi_t = \pi^2 t^2 / 6$. It is straightforward to compute the Lipschitz continuous constant $l$ of $f_{t,k}$ as $l = \frac{4d}{\|v_0\|_2^2}$. Hence we can set $l\beta_t^{1/2} = O\Big(\frac{4d}{\|v_0\|_2^2}\sqrt{\log(\frac{KL\pi^2 t^2/6}{\delta})} \Big)$. Throughout the experiments, we set $\delta = 0.05$. Namely we want the inequalities in Theorem~\ref{them:main_them} to hold with probability at least 95\%. For specific values of $l\beta_t^{1/2}$, we set $l\beta_t^{1/2} = c\frac{4d}{\|v_0\|_2^2}\sqrt{\log(\frac{KL\pi^2 t^2/6}{\delta})}$, where $c\in\mathbb{R}^+$. We test different values of $c$ in experiments. 

\begin{figure}[t]
	\centering
	\begin{subfigure}[l]{0.239\textwidth}
        \includegraphics[trim=0mm 30mm 0mm 50mm, clip, width=1.07\textwidth,keepaspectratio]{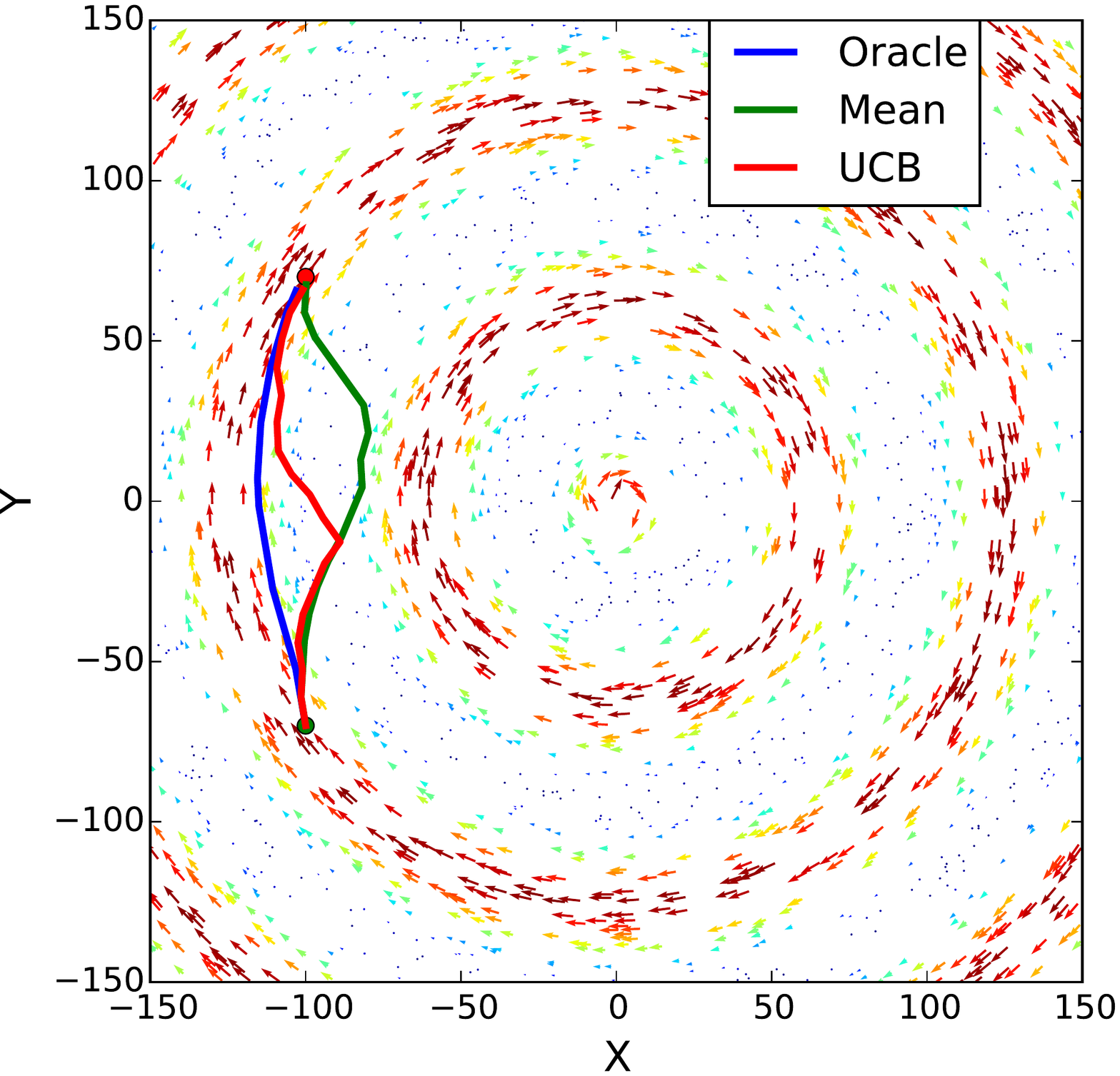}
         	\caption{Wind field 1 (Tail Wind)}
    \end{subfigure}
	\begin{subfigure}[l]{0.239\textwidth}
        \includegraphics[trim=0mm 30mm 0mm 50mm, clip, width=1.07\textwidth,keepaspectratio]{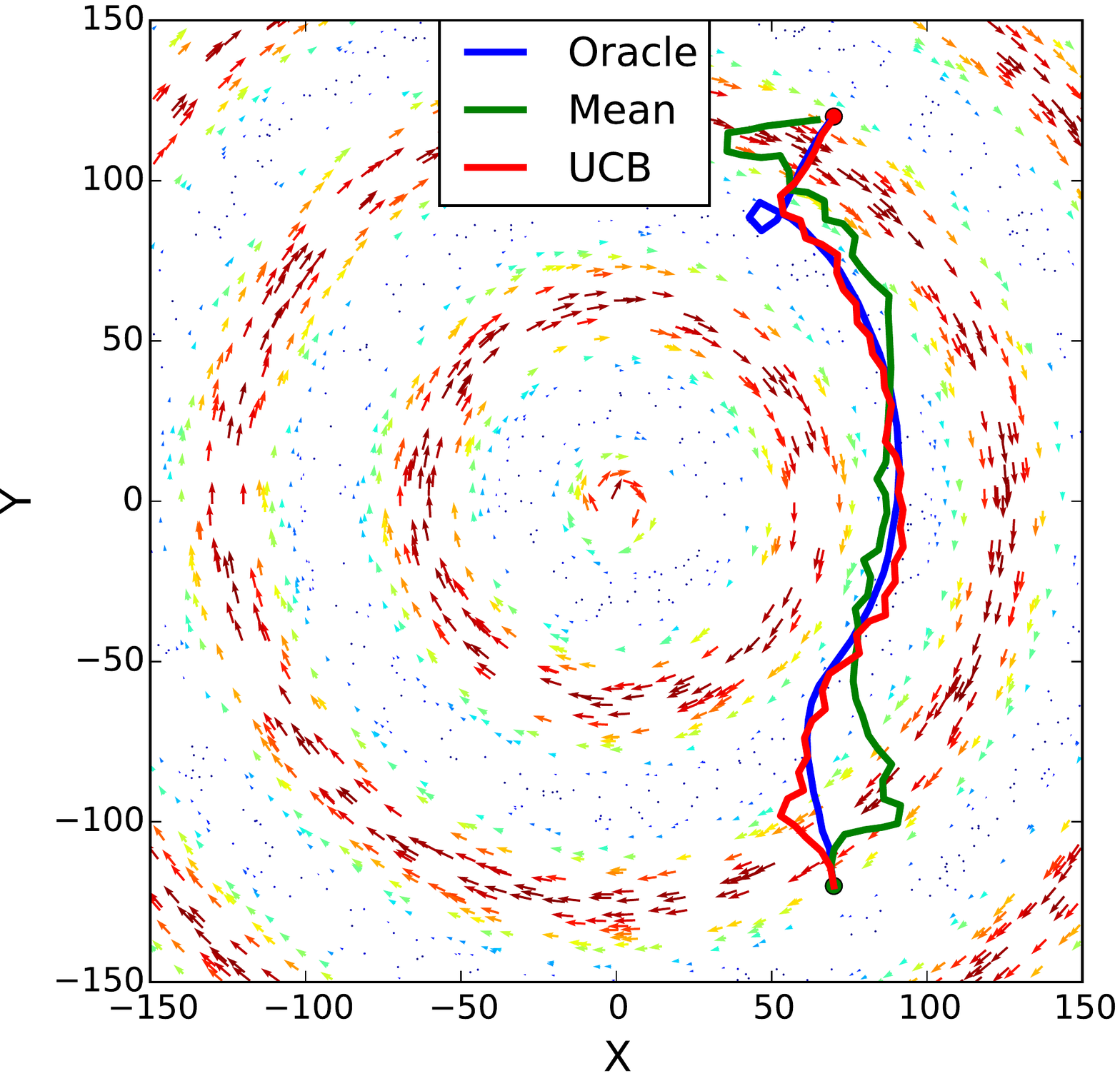}
    	\caption{Wind field 1 (Head Wind) }
    \end{subfigure}
    \caption{Trajectories (solid lines) resulting from Oracle, Mean and UCB, in the synthetic wind field, with different start (green dot) and goal position (red dot) settings
    }
     \label{fig:wind_field_trajectory}
\end{figure}

\section{EXPERIMENTS}
\begin{table}[t]
\begin{center}
\resizebox{0.35\textwidth}{!}{ 
    \begin{tabular}{llr}
    \toprule
     & Tail Wind & Head Wind \\ 
     \midrule
    \textbf{ Oracle} & 18.8\% &  88.2\% \\  
    \textbf{ UCB} & 5.4\% & 84.0\%  \\ 
    \textbf{ Mean} & 2.9\% & 75.2\%  \\ 
    \bottomrule
    \end{tabular}}
\end{center}
\vspace{-0pt}
\caption{Percentage improvement of Oracle, UCB and Mean compared to simply traveling along the straight line, under the synthetic wind filed setup. The percentage is computed as the difference between traveling time on straight line and traveling time of UCB (Oracle, Mean) divided by the traveling time on straight line. }
\label{tab:comparison_synthetic}
\end{table}

We test our  UCB-Replanning algorithm (\emph{UCB}) on the application of aircraft flight path planning with partially observed wind. We compare our algorithm to there baselines: (1) receding horizon based Oracle (\emph{Oracle}),  (2) receding horizon based Plan by Mean (\emph{Mean}) and (3) Great Circle Rout (\emph{GCR}), namely following the shortest path in geodesic distance. The  \emph{Oracle}, which has access to the true wind information (i.e., it knowns $g(\mathbf{x})$), picks trajectory using wind map $g(\cdot)$. Namely \emph{Oracle} replace $b_k$ in Alg.~\ref{alg:ucb_replanning} using $f_{t,k}(g(\mathbf{x}_{t,0}^k), ...,g(\mathbf{x}_{t,L}^k))$.  \emph{Mean} doesn't have access to the true wind information. Instead,  \emph{Mean} exactly follows the same structure of UCB-Replanning, but replace $b_k$ in Line~\ref{line:ucb_equation} of Alg.~\ref{alg:ucb_replanning} using $f_{t,k}(\mu_{t-1}(\mathbf{x}_{t,0}^k), ...,\mu_{t-1}(\mathbf{x}_{t,L}^k))$ (i.e., use the mean $\mu_{t-1}$ but ignore the standard deviations $\sigma_{t-1}$).

\subsection{Synthetic Wind Field}
We created a wind field as shown in Fig.~\ref{fig:wind_field_trajectory}. The arrow indicates the direction of the wind filed and the length of the arrows indicates the strength of the wind.  We pre-computed 25 trajectories, where each trajectory is simply a straight line with 30 segments. We set the length of each segment to be 0.2, the norm of the airplane's speed to 2.0, and the maximum norm of wind speed to 1.0. 

When the airplane is traveling downwind, the strategy is to leverage the stronger wind to shorten the traveling time, as show by the trajectory of  Oracle (blue) and the trajectory of  UCB (red) in Fig.~\ref{fig:wind_field_trajectory} (a).  When the airplane is traveling upwind, one strategy to save traveling time is to identify the regions where the wind is not strong, which is exactly what  Oracle,  UCB and  Mean performed in Fig.~\ref{fig:wind_field_trajectory} (b). 
Also as we can see from Fig.~\ref{fig:wind_field_trajectory},  UCB's trajectory  and  Oracle's trajectory are usually  different due to possible exploration at the beginning, but then gradually converges to each other. On the other hand,  Mean may perform quite sub-optimally as shown in Fig.~\ref{fig:wind_field_trajectory} (a).

Table~\ref{tab:comparison_synthetic} shows the percentage improvement of Oracle, UCB and Mean over GCR. The data used to compute the numbers in Table~\ref{tab:comparison_synthetic} is collected from 100 trials with different wind speed, and start/goal positions. As we can see,  UCB consistently outperforms  Mean, especially in the head wind case.  Oracle generally performs the best since it has access the true underlying wind field (not available in practice). In summary, the comparison clearly shows that the tradeoff in exploration and exploitation introduced by UCB strategy is beneficial for Receding Horizon Control, while pure exploitation based strategy (i.e., Mean) in some cases can perform sub-optimally.

\subsection{Real Wind Field}

\begin{figure}[t]
	\centering
	\begin{subfigure}[l]{0.4\textwidth}
        \includegraphics[trim=20mm 20mm 50mm 20mm, clip, width=1.0\textwidth,keepaspectratio]{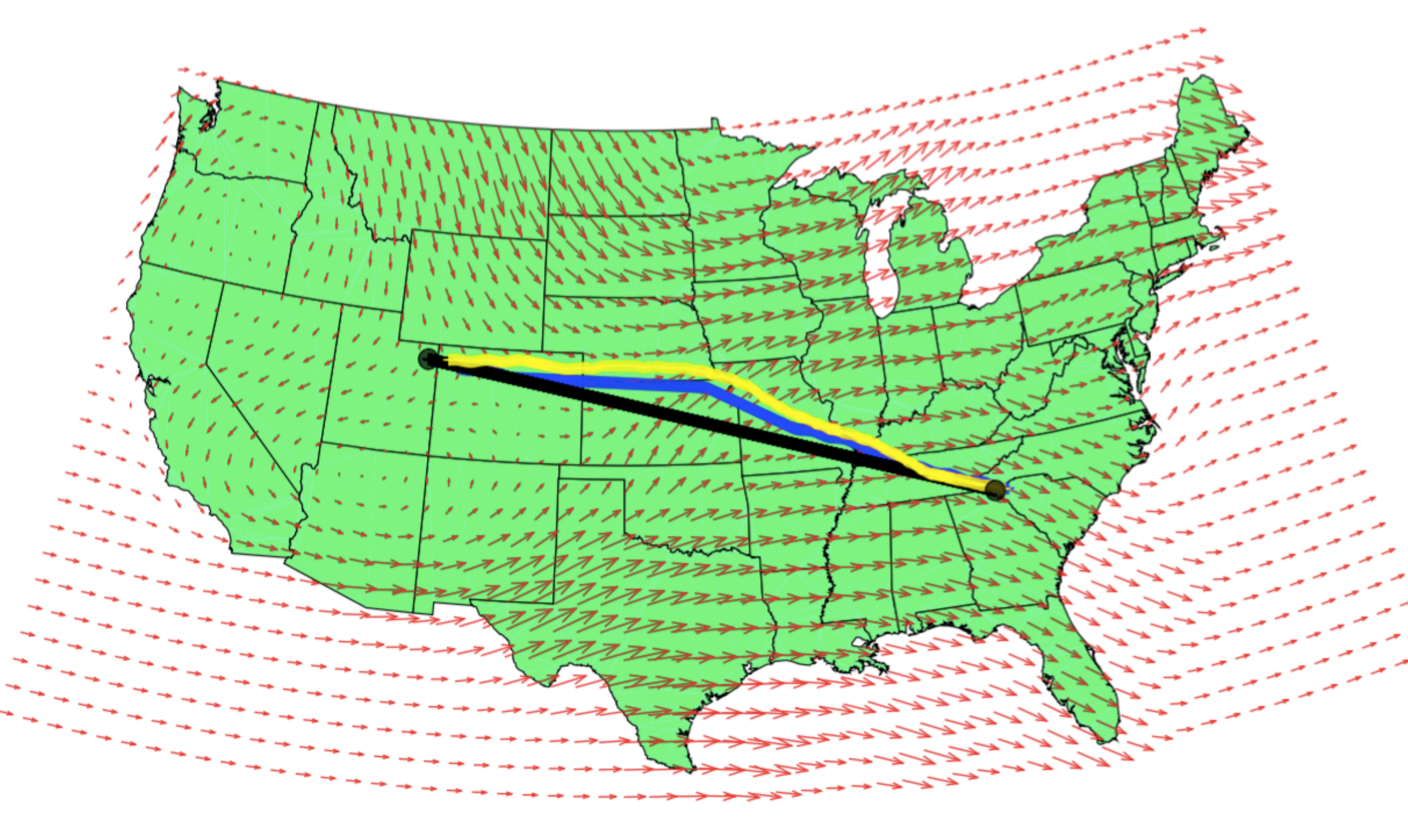}
    	\caption{South Carolina to Utah (Head Wind)}
    \end{subfigure}
  	\begin{subfigure}[l]{0.4\textwidth}
        \includegraphics[trim=20mm 20mm 50mm 20mm, clip, width=1.0\textwidth,keepaspectratio]{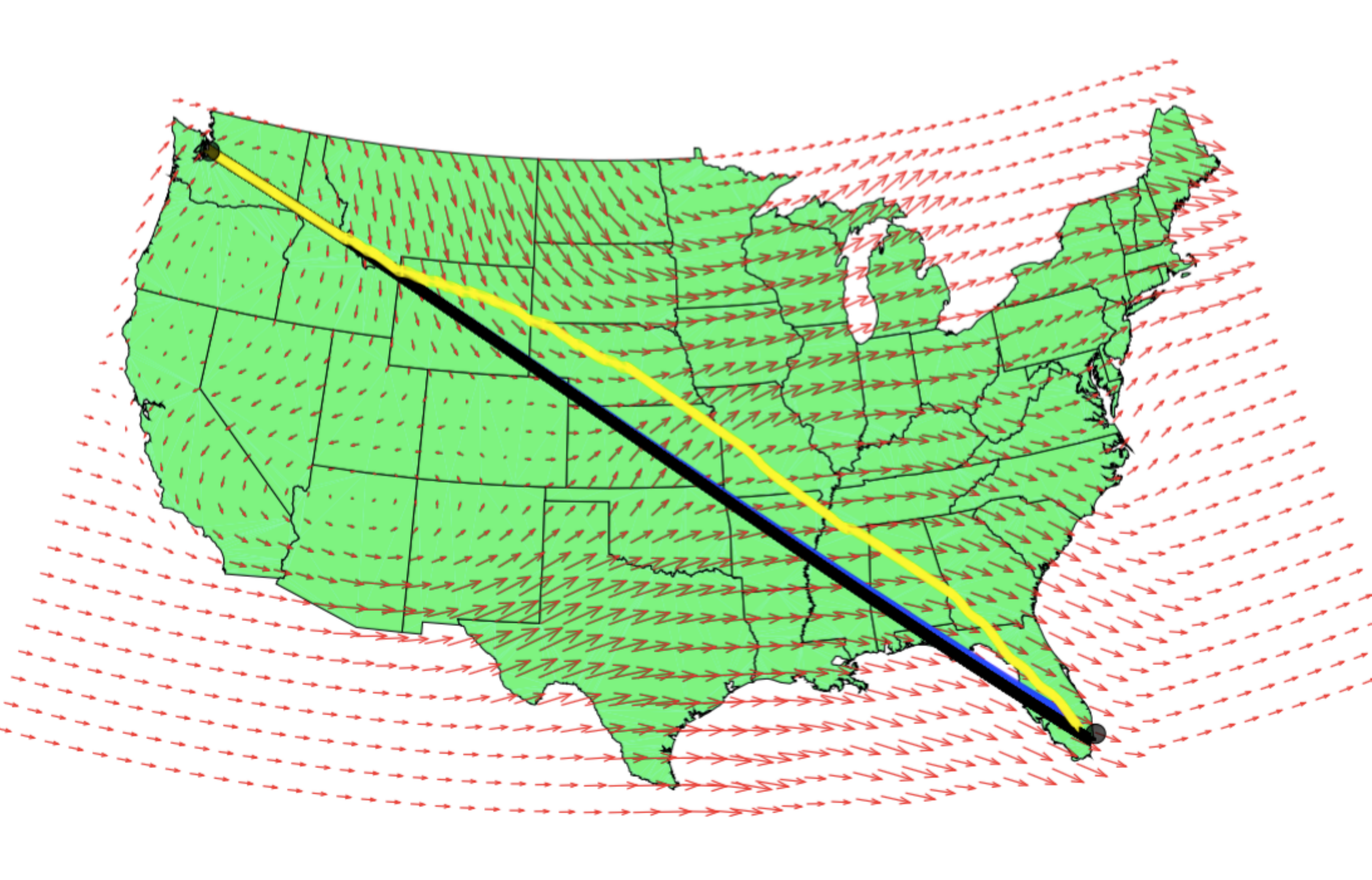}
    	\caption{Seattle to Miami (Tail Wind)}
    \end{subfigure}
        \caption{Examples of trajectories resulting from Mean (Yellow) and UCB (Blue) for a  short rout from South Caroline to Utah (a), and a long rout from Seattle to Miami (b). We also plot the Great Circle Rout (black).}
     \label{fig:real_wind_map}
\end{figure}

We also tested our algorithm on wind map constructed from real data.  We define boundaries to be the Continental United States. The Northwest boundary was set as (49.5N, 125.0W) and the Southeast as (25.0N, 67.5W). The simulated aircraft, maintains a constant cruising speed of 250 knots at an altitude of 39000 feet (11887 m). Winds encountered at this altitude could go upto upwards of 100 knots.  Using realistic data provided by NOAA we construct wind maps by fitting a Gaussian Process over wind data from all 176 stations that NOAA maintains (e.g., Fig.~\ref{fig:real_wind_map} shows one instance of generated routs). Since it is impossible to get true wind map over US, we simply use the mean of the fitted GP as an estimation of ground truth of the wind speed.  We refer readers to  \cite{kapoor2014airplanes} for details of wind map construction.

We use an existing pre-computed library of trajectories from \cite{green2007toward}. We test UCB and Mean on two different routes: (1) a \emph{short} route from South Carolina to Utah (around 1300 nautical miles),  and (2) a \emph{long} route from Seattle to Miami (around 2700 nautical miles). As we can see from Fig.~\ref{fig:real_wind_map} (a), when flying with head wind, both UCB (and Mean in this case) exhibits another strategy to save traveling time: it guides the aircraft to fly in the direction that is nearly perpendicular to the wind speed in order to cancel the wind effect when the wind is strong. When flying with good tail wind as shown in Fig.~\ref{fig:real_wind_map} (b)), UCB almost identifies the shortest path (Great Circle Rout) and follow it. Note that the trajectory resulting from UCB shown in Fig.~\ref{fig:real_wind_map} (b) is still a little bit different from GCR.

We simulate 11 days of real wind data by dividing each day into 6 hour time slots and simulating both paths for every slot (Fig.~\ref{fig:real_wind_map} shows one instance of the constructed wind maps). This in total give us 80 different trials for UCB and Mean, 40 for head wind and 40 for tail wind. We report the average traveling time and standard deviation in Table~\ref{tab:comparison_real}. As we can see, in average UCB outperforms, and both UCB and Mean significantly outperform GCR. 

We tested a variant of UCB and Mean, where we incorporated the estimation of wind speed to compute the time to goal ($t_g$) as shown in Eqn.~\ref{eq:reward}. Due to the low quality estimation of wind speed at the areas far away from the aircraft's current position, using wind estimation to compute $t_g$ actually worsen the performance of both UCB and Mean. 

\begin{table}[t]
\begin{center}
\resizebox{0.45\textwidth}{!}{ 
    \begin{tabular}{llr}
   \toprule
    & South Carolina to Utah   & Seattle to Miami \\ 
    \midrule
    \textbf{UCB} & \textbf{21079.7}$\pm$1109.0 & \textbf{31333.1}$\pm$1269.0  \\ 
    \textbf{Mean} &  21183.3$\pm$1263.1 & {31716.5$\pm$1016.0} \\ 
    \textbf{GCR} & 33712.5$\pm$1852.1 &  48195.7$\pm$1952.7 \\ 
    \bottomrule
    \end{tabular}}
\end{center}
\vspace{-5pt}
\caption{Average traveling time (seconds) with standard deviation resulting from UCB, Mean and GCR under the real wind field setup.}
\label{tab:comparison_real}
\end{table}

\section{CONCLUSION}
We present UCB-Replanning, an online receding horizon based path planner that operates in an environment with latent information that can be modeled by Gaussian Processes. Equipped with a pre-computed trajectory library, at every iteration UCB-Replanning algorithm picks a trajectory to execute while collecting observations of the latent information on the fly to update the Gaussian Process.  UCB-Replanning leverages the idea of optimism in the face of uncertainty to tradeoff exploration and exploitation in a near-optimal manner and achieve no-regret property with respect to an optimal decision maker that has full access to the latent information of the environment.




\section*{APPENDIX}

\subsection{Proof of Lemma~\ref{lemma:high_prob_on_variable}}
\begin{proof}
The proof is essentially the same as Lemma 5.1 in \cite{srinivas2010gaussian}. For completeness we present the proof here. Fix $t$ and $\mathbf{x}\in D_t$. Note that under our assumption that $f$ is a sample from the prior of GP, we have $g(\mathbf{x})\sim \mathcal{N}(\mu_{t-1}(\mathbf{x}), \sigma_{t-1}(\mathbf{x})^2)$. Hence we have $(g(x) - \mu_{t-1}(\mathbf{x}))/\sigma_{t-1}(\mathbf{x})\sim \mathcal{N}(0,1)$. The proof of Lemma 5.1 in \cite{srinivas2010gaussian} shows that if $r\sim\mathcal{N}(0,1)$, we have $P(|r|\geq c) \leq \exp(-c^2/2), \forall c>0$.
This gives us the following result:
\begin{align}
Pr\big( |g(\mathbf{x}) - \mu_{t-1}(\mathbf{x})| \leq \beta_t^{1/2}\sigma_{t-1}(\mathbf{x}) \big)\geq 1 - \exp(-\beta_t/2). \nonumber
\end{align}We choose any sequence $\pi_t$ such that $\sum\frac{1}{\pi_t} = 1$. For instance we can set $\pi_t = \pi^2 t^2/6$. Now let us set $\exp(-\beta_t/2) = \frac{\delta}{\pi_t |D_t|}$, namely we set $\beta_t = 2\log(\frac{|D_t|\pi_t}{\delta})$. Now use union bound over all rounds from $1$ to $T$ and over all $LK$ waypoints in $D_t$, we can prove the above lemma. 
\end{proof}

\subsection{Proof of Theorem~\ref{them:main_them}}
\begin{proof}
Let us define event $B$ as 
\begin{align}
&\forall k\in [K],\forall t, |f_{t,k}(\{g(\mathbf{x}_{t,j}^k)\}_{j=0}^{L}) - f_{t,k}(\{\mu_{t-1}(\mathbf{x}_{t,j}^k)\}_{j=0}^{L})|\nonumber\\
&\leq l\beta_t^{1/2} \sum_{j=0}^{L}\sigma_{t-1}(\mathbf{x}_{t,j}^{k}), \nonumber
\end{align}and from Lemma.~\ref{lemma:ci_traj} we know that the probability of event $B$ happens is at least $1-\delta$. Below we show that event $B$ implies Theorem.~\ref{them:main_them}. For the rest of the proof, we assume we condition on that event $B$ happens. Consider round $t$. Note that $I_t$ is defined as:
\begin{align}
\label{eq:m_1}
& I_t = \arg\max_{k\in[K]}f_{t,k}(\{\mu_{t-1}(\mathbf{x}_{t,j}^k)\}_{j=0}^{L}) \nonumber\\
&\;\;\;\; + l\beta^{1/2}\sum_{j=0}^{L}\sigma_{t-1}(\mathbf{x}_{t,j}^{k}),  
\end{align}and $I_t^*$ is defined as:
\begin{align}
\label{eq:m_2}
I_t^* = \arg\max_{k\in[K]}f_{t,k}(\{g(\mathbf{x}_{t,j}^k)\}_{j=0}^{L}),
\end{align} namely the best trajectory one would pick at this round $t$ if $g$ is known. Now let us define the single step regret $r_t$ as:
\begin{align}
r_t = f_{t,I_t^*}\big(\{g(\mathbf{x}_{t,j}^{I_t^*})\}_{j=0}^{L}\big) - f_{t,I_t}\big(\{g(\mathbf{x}_{t,j}^{I_t})\}_{j=0}^{L}\big),
\end{align}namely the regret  one has by choosing $I_t$ instead of $I_t^*$ at round $t$. We can upper bound $r_t$ using Eqn.~\ref{eq:m_1} and \ref{eq:m_2} as follows:
\begin{align}
&r_t = f_{t,I_t^*}(\{g(\mathbf{x}_{t,j}^{I_t^*})\}_{j=0}^{L}) - f_{t,I_t}(\{g(\mathbf{x}_{t,j}^{I_t})\}_{j=0}^{L})\nonumber \\
&\leq f_{t,I_t^*}(\{\mu_{t-1}(\mathbf{x}_{t,j}^{I_t^*})\}_{j=0}^{L}) +l\beta_t^{1/2}\sum_{j=0}^{L}\sigma_{t-1}(\mathbf{x}_{t,j}^{I_t^*})\nonumber \\
&\;\;\;\;\;\;\;\; - (f_{t,I_t}(\{\mu_{t-1}(\mathbf{x}_{t,j}^{I_t})\}_{j=0}^{L}) -l\beta_t^{1/2}\sum_{j=0}^{L}\sigma_{t-1}(\mathbf{x}_{t,j}^{I_t})) \nonumber\\ 
&\;\;\;\;\;\;\;\;  (\texttt{event $B$ happens}) \nonumber\\
&\leq f_{t,I_t}(\{\mu_{t-1}(\mathbf{x}_{t,j}^{I_t})\}_{j=0}^{L}) +l\beta_t^{1/2}\sum_{j=0}^{L}\sigma_{t-1}(\mathbf{x}_{t,j}^{I_t})\nonumber \\
& \;\;\;\;\;\;\;\; - (f_{t,I_t}(\{\mu_{t-1}(\mathbf{x}_{t,j}^{I_t})\}_{j=0}^{L}) -l\beta_t^{1/2}\sum_{j=0}^{L}\sigma_{t-1}(\mathbf{x}_{t,j}^{I_t})) \;\;\;\; \nonumber
 \\ &\;\;\;\;\;\;\;\;\; (\texttt{Definition of $I_t$ from Eqn.~\ref{eq:m_1}}) \nonumber\\
& = 2l\beta_t^{1/2}\sum_{j=0}^{L-1}\sigma_{t-1}(\mathbf{x}_{t,j}^{I_t}).
\end{align}
The square of $r_t$ can be bounded as follows: 
\begin{align}
&r_t^2 = 4l^2\beta_t(\sum_{j=0}^{L}\sigma_{t-1}(\mathbf{x}_{t,j}^{I_t}))^2\leq 4l^2\beta_tL\sum_{j=0}^{L}\sigma_{t-1}(\mathbf{x}_{t,j}^{I_t})^2 \nonumber \\
&\leq 4l^2\beta_T L\sigma^{-2}\sum_{j=0}^{L}\sigma_{t-1}(\mathbf{x}_{t,j}^{I_t})^2\sigma^2 \nonumber \\
&= 4l^2\beta_T L\sigma^2\sum_{j=0}^{L}\Big[\frac{\sigma^{-2}\sigma_{t-1}(\mathbf{x}_{t,j}^{I_t})^2}{\log(1+\sigma^{-2}\sigma_{t-1}(\mathbf{x}_{t,j}^{I_t})^2)}\log\big(1\nonumber\\
&\;\;\;\;\;\; +\sigma^{-2}\sigma_{t-1}(\mathbf{x}_{t,j}^{I_t})^2\big)\Big] \nonumber \\
& \leq 4l^2\beta_TL\sigma^2 \frac{\sigma^{-2}}{\log(1+\sigma^{-2})}\sum_{j=0}^{L}\log\big(1+\sigma^{-2}\sigma_{t-1}(\mathbf{x}_{t,j}^{I_t})^2\big) \nonumber \\
&= 4l^2\beta_TL\sigma^2 C_1\sum_{j=0}^{L-1}\log\big(1+\sigma^{-2}\sigma_{t-1}(\mathbf{x}_{t,j}^{I_t})^2\big)
\end{align} where $C_1 = \sigma^{-2}/\log(1+\sigma^{-2})\geq 1$ and the third inequality comes from the fact that the function $x/\log(1+x)$ is non-decreasing when $\mathbf{x}>0$, and $\sigma^{-2}\sigma_{t-1}^2 \leq \sigma^{-2}$ because we assume $\sigma_{t-1}(\mathbf{x})^2\leq \kappa(\mathbf{x},\mathbf{x})\leq 1$ for any $\mathbf{x}$.  

Since the regret $\mathbf{R}_T = \sum_{t=1}^T r_t$, we must have $\mathbf{R}_T^2\leq T\sum_t r_t^2$. Using Lemma 5.3 and Lemma 5.4 from \cite{srinivas2010gaussian}, we can link $\mathbf{R}_T$ to the maximum information gain as follows:
\begin{align}
&\mathbf{R}_T^2 \leq 4l^2\beta_TL^2\sigma^2C_1 T\sum_{t=1}^T\sum_{j=0}^{L}\log(1+\sigma^{-2}\sigma_{t-1}(\mathbf{x}_{t,j}^{I_t})^2) \nonumber\\
&\;\;\;\;\;\;  \leq 4l^2\beta_TL^2\sigma^2C_1 T\gamma_T,   \nonumber
\end{align} where $\gamma_T$ is the maximum information gain defined as 
\begin{align}
&\gamma_T = \max_{A\subseteq \mathbb{X}, |A| = LT}I(v_A; g) \nonumber\\
&= \max_{A\subseteq \mathbb{X}, |A| = LT}H(v_A) - H(v_A | g),\nonumber
\end{align} where $H(x)$ is the entropy of the random variable $x$, $H(x|y)$ is the conditional entropy, $v_A = \{g(\mathbf{x}) + \epsilon\}_{x\in A}$ is the set of observations of $g(\mathbf{x})$ for all states $\mathbf{x}$ in set $A$. Namely $\gamma_T$ quantifies the maximum reduction in uncertainty about $g$
from revealing the observations of $g$ on $LT$ states.

Theorem 5 from \cite{srinivas2010gaussian} shows that $\gamma_T\leq O(d \log LT)$ when $\kappa(\mathbf{x},\mathbf{x}') = \mathbf{x}^T\mathbf{x}'$ and $\gamma_T\leq O((\log LT)^{d+1})$ when $\kappa(\mathbf{x},\mathbf{x}') = \exp(-c\|\mathbf{x}-\mathbf{x}'\|^2)$. Substitute these results to the above inequality, we prove the theorem. 
\end{proof}



\bibliographystyle{IEEEtran}
\bibliography{allbib}

\end{document}